\newtheorem{theorem}{Theorem}
\title{On the Robustness of AlphaFold: A COVID-19 Case Study}
\author {
    Ismail R. Alkhouri\textsuperscript{\rm 1},
    Sumit Jha\textsuperscript{\rm 2},
    Andre Beckus\textsuperscript{\rm 3},
    George Atia\textsuperscript{\rm 1},
    Rickard Ewetz\textsuperscript{\rm 1},\\
    Arvind Ramanathan\textsuperscript{\rm 4},
    Susmit Jha\textsuperscript{\rm 5},
    Alvaro Velasquez\textsuperscript{\rm 6}
}
\begin{document}

\maketitle

\begin{abstract}
Protein folding neural networks (PFNNs) such as AlphaFold predict remarkably accurate structures of proteins compared to other approaches. However, the robustness of such networks has heretofore not been explored. This is particularly relevant given the broad social implications of such technologies and the fact that biologically small perturbations in the protein sequence do not generally lead to drastic changes in the protein structure. In this paper, we demonstrate that AlphaFold does not exhibit such robustness despite its high accuracy. This raises the challenge of detecting and quantifying the extent to which these predicted protein structures can be trusted. To measure the robustness of the predicted structures, we utilize ($i$) the root-mean-square deviation (RMSD) and ($ii$) the Global Distance Test (GDT) similarity measure between the predicted structure of the original sequence and the structure of its adversarially perturbed version. We prove that the problem of minimally perturbing protein sequences to fool protein folding neural networks is \textbf{NP-complete}. Based on the well-established BLOSUM62 sequence alignment scoring matrix, we generate adversarial protein sequences and show that the RMSD between the predicted protein structure and the structure of the original sequence are very large when the adversarial changes are bounded by ($i$) 20 units in the BLOSUM62 distance, and ($ii$) five residues (out of hundreds or thousands of residues) in the given protein sequence. In our experimental evaluation, we consider 111 COVID-19 proteins in the Universal Protein resource (UniProt), a central resource for protein data managed by the European Bioinformatics Institute, Swiss Institute of Bioinformatics, and the US Protein Information Resource. These result in an overall GDT similarity test score average of around 34\%, demonstrating a substantial drop in the performance of AlphaFold.  
\end{abstract}

\section{Introduction}

Proteins form the building blocks of life as they enable a variety of vital functions essential to life and reproduction. Naturally occurring proteins are bio-polymers \textcolor{black}{typically} composed of 20 amino acids and this primary sequence of amino acids is well known for many proteins, thanks to high-throughput sequencing techniques. However, in order to understand the functions of different protein molecules and complexes, it is essential to comprehend their three-dimensional (3D) structures. 
Until recently, one of the grand challenges in structural biology has been the accurate determination of the 3D structure of the protein from its primary sequence. Such accurate predictive protein folding promises to have a profound impact on the design of therapeutics for diseases and drug discovery \cite{chan2019advancing}. 

%
\begin{figure}[t]
\centering
\includegraphics[width=8cm]{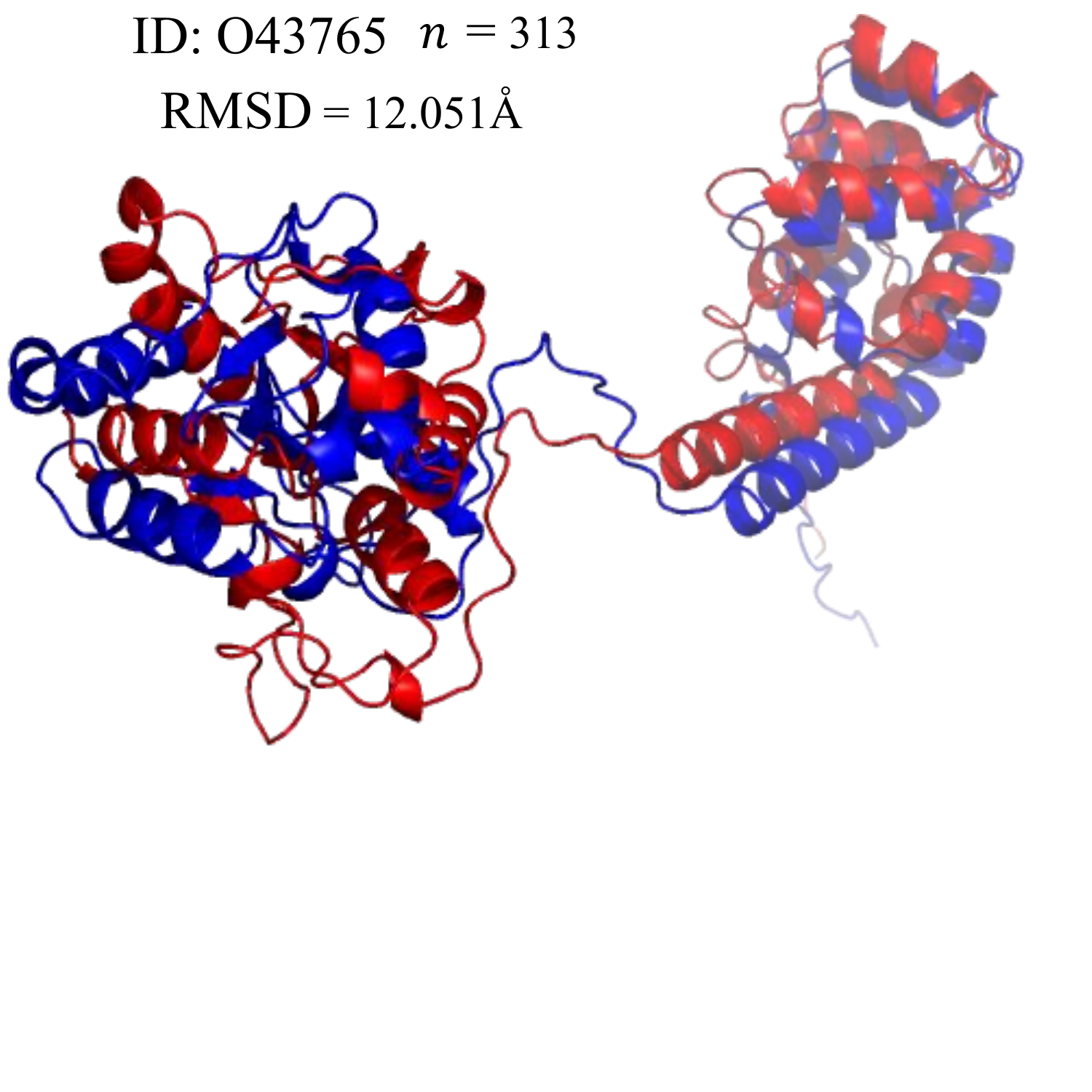}
\vspace{-2.8cm}
\caption{\small{The structure of the original (black) and adversarial (red) sequences predicted using AlphaFold for the Small glutamine-rich tetratricopeptide repeat-containing protein alpha sequence. The length of the protein sequence is denoted by $n$. For structures, after their alignment using PyMol \cite{pymol}, the Root Mean Square Deviation (RMSD) is given in Angstroms (equal to $10^{-10}$ meters and denoted by \r{A}).}}
\label{fig: intro}
\end{figure}

AlphaFold \cite{AlphaFold2021} achieved unparalleled success in predicting protein structures using neural networks and remains first at the Critical Assessment of protein Structure Prediction (CASP14), which corresponds to year 2020, competition. While this has been touted as a breakthrough for structural biology \cite{bagdonas2021case}, the robustness of its predictions has not yet been explored. The main contribution of this paper is to \textcolor{black}{demonstrate} the susceptibility of AlphaFold to adversarial sequences by generating several examples where protein sequences that vary only in five residues out of hundreds or thousands of residues result in very different 3D protein structures. \textcolor{black}{We present the problem of adversarial attacks on Protein Folding Neural Network (PFNN) 
and prove that the problem is \textbf{NP-complete}.} We use sequence alignment scores~\cite{henikoff1992amino} such as those derived from Block Substitution Matrices (BLOSUM62) to identify a space of similar protein sequences used in constructing adversarial perturbations. For the output structures, we leverage the standard metrics commonly used in CASP, namely ($i$) the root-mean-square deviation (RMSD) and ($ii$) the Global Distance Test (GDT) similarity measure between the predicted structure and the structure of its adversarially perturbed sequence. See Figure~\ref{fig: intro} and its caption for an example.

\textcolor{black}{Moreover, we conduct two experiments investigating the choice of the BLOSUM threshold and the use of the prediction, per-residue, confidence information obtained from AlphaFold}. Our experiments show that different input protein sequences have very different adversarial robustness as determined by the RMSD (GDT-TS) in the protein structure predicted by AlphaFold. These values range from \textcolor{black}{1.011\r{A} (0.43\%) to 49.531\r{A} (98.8\%)} when the BLOSUM62 distance between the original and adversarial sequences is bounded by a threshold of 20 units with a hamming distance of 5 residues only. Hence, our proposed approach is a first step in the direction of identifying protein sequences on which the predicted 3D structure cannot be trusted.


\section{Summary and Related work}

PFNNs~\cite{jumper2021highly,baek2021accurate} should be expected to obey the natural observation that biologically small changes in the sequence of a protein usually do not lead to drastic changes in the protein structure. Almost four decades ago, it was noted that two structures with  50\% sequence identity align within approximately 1\r{A} RMSD from each other~\cite{chothia1986relation}. Two proteins with even 40\% sequence identity and at least 35 aligned residues align within approximately 2.5\r{A}~\cite{sander1991database}. The phenomenon of sequence-similar proteins producing similar structures have also been observed in larger studies~\cite{rost1999twilight}. As with almost any rule in biology, a small number of counterexamples to the conventional wisdom of similar sequences leading to similar structures do exist, wherein even small perturbations can potentially alter the entire fold of a protein. However, such exceptions are not frequent and often lead to exciting investigations~\cite{Cordes_2000,tuinstra2008interconversion}.

\textcolor{black}{Manipulating the multiple sequence alignment step of AlphaFold has been studied in \cite{stein2021modeling} using in silico mutagenesis. However, there, the goal is not to study the robustness of the protein folding neural networks, but rather to enhance the prediction capability of AlphaFold in terms of the intrinsic conformational heterogeneity of proteins. The authors in \cite{del2022sampling}, present a method that manipulates inputs to obtain diverse distinct structures that are absent from the AlphaFold training data. Using membrane proteins, the authors show that their method enhances the multiple sequence alignment step while generating more accurate structures.}

The work in \cite{jha2021protein} is aimed at generating adversarial sequences in order to cause significant damage to the output predicted structure of RosettaFold \cite{baek2021accurate}, which, according to CASP, is the second best protein folding neural network. However, the authors only show results for a few proteins and do not consider all the standard metrics for measuring the output structures. In contrast, in this paper, we present results for more than 100 sequences, derive a complexity proof for the problem of finding adversarial protein sequences, and, based on the CASP competition, utilize all the standard metrics for measuring the output structures.


%
%
%
%

\subsection{Robustness Metric using Adversarial Attacks}


The similar-sequence implies similar-structure paradigm dictates that PFNNs should make robust predictions. 
Given a protein sequence of $n$ residues $S=s_1 s_2 \dots s_n$ with a three-dimensional structure $\mathcal{A}(S) = (x_1, y_1, z_1), \dots, (x_n,y_n,z_n)$, we define a notion of biologically similar sequences $\mathcal{V}$ using Block Substitution Matrices (BLOSUM)~\cite{henikoff1992amino}, and then employ formulations of adversarial attacks~\cite{goodfellow2018making} on PFNNs within this space of similar sequences to identify a sequence $S_{\textrm{adv}} \in \mathcal{V}$ that produces a maximally different three-dimensional structure $\mathcal{A}(S_{\textrm{adv}})$. We then compute the RMSD and GDT between the structures for the original and adversarial inputs ($\mathcal{A}(S)$ and $\mathcal{A}(S_{\textrm{adv}})$), and use these metrics as the robustness measure. If the RMSD (GDT) is small (high), the response of the PFNN is deemed robust; a large (small) RMSD (GDT) indicates that the predicted structure is not robust. 

\subsection{BLOSUM Similarity Measures}

Given two sequences of $n$ residues $S=s_1 s_2 \dots s_n$ and $S'=s'_1 s'_2 \dots s'_n$, in which every residue $s_i$ (or $s'_i$) is from the set $\mathcal{X} = \{A,R,N,D,C,Q,E,G,H,I,L,K,M,F,P,\\S,T,W,Y,V\}$ of amino acids, a natural question is how to compute the sequence similarity $D_{\textrm{seq}}$ between these proteins. A naive approach would be to count the number of residues that are different, i.e., the Hamming distance. However, an analysis of naturally occurring proteins shows that not all changes in residues have the same impact on protein structures. Changes to one type of residue are more likely to cause structural variations than changes to another type of residue.

Early work in bioinformatics focused on properties of amino acids and reliance on genetic codes. However, more modern methods have relied on the creation of amino acid scoring matrices that are derived from empirical observations of frequencies of amino acid replacements in homologous sequences~\cite{dayhoff197822,jones1992rapid}. The original scoring matrix, called the PAM250 matrix, was based on empirical analysis of $1572$ mutations observed in $71$ families of closely-related proteins that are 85\% or more identical after they have been aligned. The PAM1 model-based scoring matrix was obtained by normalizing the frequency of mutations to achieve a 99\% identity between homologous proteins. These results were then extrapolated to create the PAM10, PAM30, PAM70 and PAM120 matrices with 90\%, 75\%, 55\%, and 37\% identity between homologous proteins.

Another interesting approach~\cite{henikoff1992amino} to understanding protein similarity is the direct counting of replacement frequencies using the so-called Block Substitution Matrices (BLOSUM). Instead of relying solely on sequences of homologous proteins that are relatively harder to find, the BLOSUM approach focuses on identifying conserved blocks or conserved sub-sequences in a larger variety of proteins potentially unrelated by evolutionary pathways and counts the frequency of replacements within these conserved sub-sequences. BLOSUM62 (\textcolor{black}{Figure~\ref{fig: blosum matrix}}), BLOSUM80 and BLOSUM90 denote block substitution matrices that are obtained from blocks or subsequences with at least 62\%, 80\%, and 90\% similarity, respectively. The BLOSUM matrix $[B_{ij}]$ is a matrix of integers where each entry denotes the similarity between residue of type $b_i \in \mathcal{X}$ and type $b_j \in \mathcal{X}$.

We identify the space of biologically similar sequences $\mathcal{V}$ for a given protein sequence $S$ with respect to the BLOSUM distance. 
We expect the predicted structures for the similar sequences to be similar. If there is a large RMSD (or small GDT) between
the predicted structure $\mathcal{A}(S)$ and the structure of the adversarial sequence $\mathcal{A}(S_{\textrm{adv}})$, it would reflect a lack of robustness in the prediction of the network. We adopt a sequence similarity measure that counts replacement frequencies in conserved blocks across different proteins.

\section{Approach}
\label{sec:approach}

Our approach to evaluating the robustness of PFNNs is based on two main ideas: ($i$) the existence of adversarial examples in PFNNs that produce adversarial structures possibly very different from the original structure, and ($ii$) the use of BLOSUM matrices for identifying a neighborhood of a given sequence that are biologically similar and hence expected to have similar 3D structures. We utilize the RMSD and GDT between the structure of an original protein sequence and the structure of the adversarial sequence as a measure of robustness of a protein folding network on the given input. In this work, we focus on the state-of-the-art AlphaFold model\textcolor{black}{, the winner of the 1st place in CASP2020.}
\subsection{Sequence  Similarity Measures}

Given two sequences $S=s_1 s_2 \dots s_n$ and $S'=s'_1 s'_2 \dots s'_n$,  the BLOSUM distance between the two sequences is given by Equation (\ref{eqn: seq dis}) below. For an illustrative example of $D_{\mathrm{seq}}$, see Figure~\ref{fig: blosum ex}. 
\begin{equation} \label{eqn: seq dis}
D_{\mathrm{seq}}(S,S') = \sum_{i\in [n]} \left( B_{{s_i}{s_i}} - B_{{s_i}{s'_i}} \right)\:.
\end{equation}
%
%
\begin{figure}[h]
\centering
\includegraphics[width=9cm]{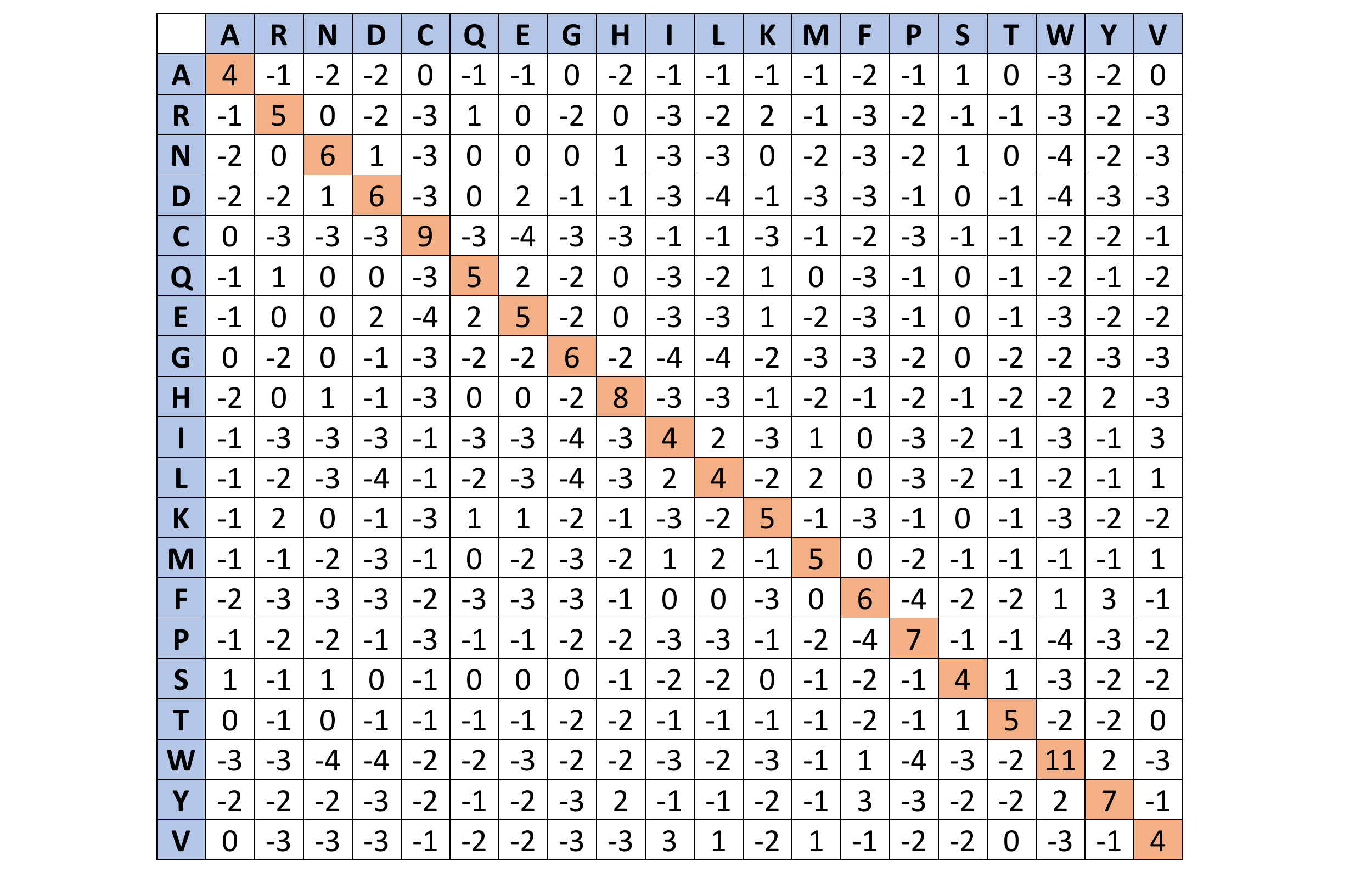}
\vspace{-0.6cm}
\caption{\small{The BLOSUM62 matrix. }} 
\label{fig: blosum matrix}
\end{figure}
\begin{figure*}[h]
\centering
\includegraphics[width=16cm]{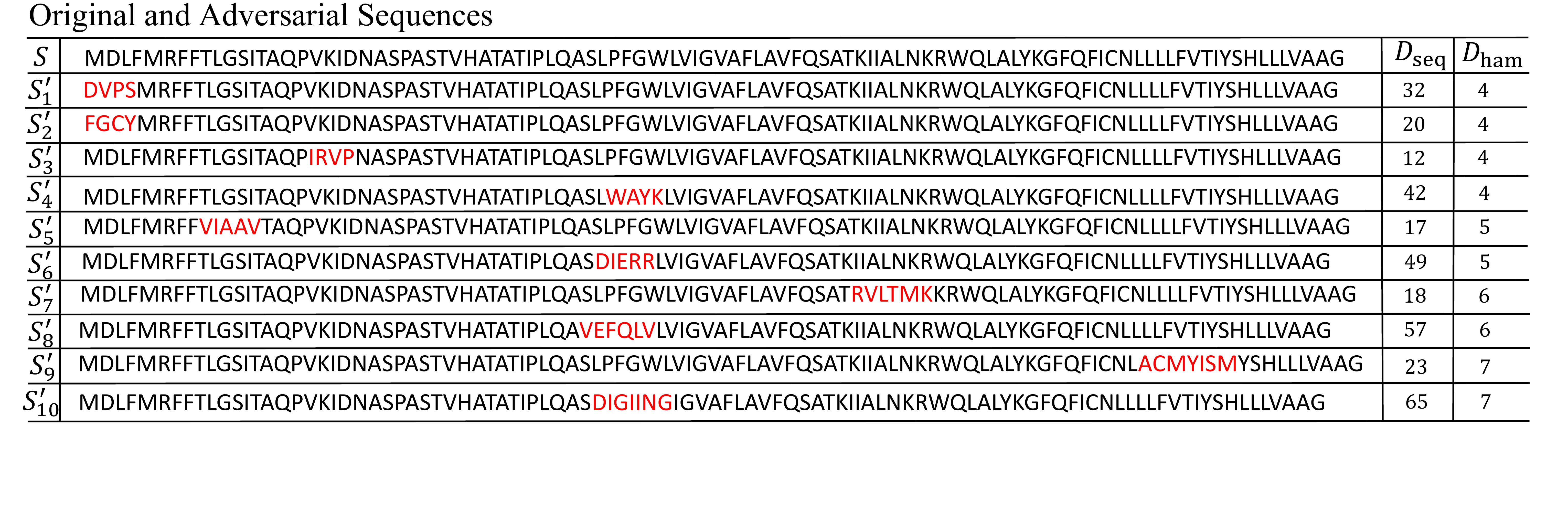}
\vspace{-1.11cm}
\caption{\small{The original sequence $S$ is followed by 10 sequences generated by changing 4, 5, 6, and 7 residues. The sequences are samples from the space in \eqref{eqn: space} with different values of $L$ and $H$. The distance $D_{\mathrm{seq}}$ is calculated using \eqref{eqn: seq dis}.}} 
\label{fig: blosum ex}
\end{figure*}
%


\subsection{Output Structural Measure}
Given a sequence of $n$ residues $S=s_1 s_2 \dots s_n$, its three dimensional structure $\mathcal{A}(S)$ is an ordered $n$-tuple of three-dimensional co-ordinates $(x_1, y_1, z_1), \dots (x_n,y_n,z_n)$. Our goal is to utilize a structural distance measure that captures the variations in the two structures $\mathcal{A}(S)$ and $\mathcal{A}(S')$ and is invariant to rigid-body motion. Therefore, in this work, we use standard structural distances, namely the RMSD, measured in \r{A}, and the GDT with its two variants: ($i$) the Total Score (TS) and ($ii$) the High Accuracy (HA) \cite{zemla2003lga}.

Given the output structure of the adversarial sequence $\mathcal{A}(S')$, an alignment algorithm is employed before computing the RMSD and GDT measures between the two structures of interest. We use the alignment procedure implemented in PyMOL \cite{pymol} to align $\mathcal{A}(S')$ with regard to the target structure $\mathcal{A}(S)$. Let the aligned structure be denoted by $\hat{\mathcal{A}}(S') = (\hat{x}'_1, \hat{y}'_1, \hat{z}'_1), \dots, (\hat{x}'_n, \hat{y}'_n, \hat{z}'_n)$. Then, the RMSD, measured in \r{A}, is obtained as 
\begin{equation} \label{eqn: RMSD}
\begin{gathered}
\textrm{RMSD}(\mathcal{A}(S), \hat{\mathcal{A}}(S')) = \sqrt{ \frac{1}{n} \sum_{i\in [n]} d(\mathcal{A}(S)_i, \hat{\mathcal{A}}(S')_i) } \:,
\end{gathered}
\end{equation}
where $d(\mathcal{A}(S)_i, \hat{\mathcal{A}}(S')_i) =    (x_i-\hat{x}'_i)^2 + (y_i-\hat{y}'_i)^2 + (z_i-\hat{z}'_i)^2 \:$ and $\mathcal{A}(S)_i$ represents the 3D carbon-alpha coordinates of the $i^{\textrm{th}}$ residue. Using the carbon-alpha coordinates is the standard approach in CASP \cite{zemla2003lga}. \par

Another standard metric for gauging the similarity of protein structures is the GDT similarity measure, introduced by \cite{zemla2003lga} and commonly used in the CASP competition along with the RMSD. In some cases, the latter is known to be sensitive to outliers \cite{zemla2003lga}. The GDT score returns a value in $[0,1]$ where $1$ indicates identical structures, and is computed with respect to four thresholds, $\delta_j$, as
\begin{equation} \label{eqn: GDT}
\begin{gathered}
\textrm{GDT}(\mathcal{A}(S), \hat{\mathcal{A}}(S')) = \\
\frac{1}{4n} \sum_{j\in [4]} \sum_{i\in [n]} \textbf{1}\big( d(\mathcal{A}(S)_i, \hat{\mathcal{A}}(S')_i) < \delta_j \big) \:,
\end{gathered}
\end{equation}
where the thresholds $\delta_1$, $\delta_2$, $\delta_3$, and $\delta_4$ for TS (HA) are given by $1 (0.5)$, $2 (1)$, $4 (2)$, and $8 (4)$ for $j$ equals to $1$, $2$, $3$, and $4$ respectively, and $\textbf{1}(\cdot)$ is the indicator function. 
In \eqref{eqn: GDT}, each $j\in [4]$ reflects the number of residues in the structures for which the distance is \textcolor{black}{less than} $\delta_j$.

\subsection{Adversarial Attacks on PFNNs}

Small carefully crafted changes in a few pixels of input images cause well-trained neural networks with otherwise high accuracy to consistently produce incorrect responses in domains such as computer vision~\cite{croce2020robustbench,andriushchenko2020square,bai2020improving,croce2021mind}. 
Given a neural network $\mathcal{A}$ mapping a sequence $S$ of residues to a three-dimensional geometry $\mathcal{A}(S)$ describing the structure of the protein, we seek to obtain a sequence $S'$ such that the sequence similarity measure $D_{\mathrm{seq}}(S, S')$ between $S$ and $S'$ is small and some structural distance measure $D_{\mathrm{str}}(\mathcal{A}(S), \mathcal{A}(S'))$ is maximized. This can be achieved by solving the following optimization problem
%
%
\begin{equation} \label{eqn: opt}
\begin{gathered}
\max_{S'} D_{\mathrm{str}}\left(\mathcal{A}(S), \mathcal{A}(S')\right)  \; \text{s.t.} \; D_{\mathrm{seq}}(S,S') \leq L\:.
\end{gathered}
\end{equation}

\textcolor{black}{In our experiments, we set $L=20$ and $D_{\mathrm{str}}$ as the RMSD measure. Given the discrete nature of the input sequences, well-known methods for generating adversarial examples (e.g. gradient-based methods) fail to produce valid and accurate results. 
As such, we propose a solution based on a brute-force exploration in the space of biologically similar sequences that, given a sequence of interest $S$ with $n$ residues, can be defined as} 
\begin{equation} \label{eqn: space}
\begin{gathered}
\mathcal{V}_{L,H}(S) = \{S'\in \mathcal{X}^{n} \mid D_{\mathrm{seq}}(S,S')\leq L \text{ and } \\ D_{\mathrm{ham}}(S,S')\leq H \}\:,
\end{gathered}
\end{equation}
where $\mathcal{X}^n$ is the set of all possible sequences over $\cal X$ of length $n$, $D_{\mathrm{ham}}$ is the hamming distance, and $H$ is a predefined threshold. For long sequences, the search space can be extensively large. Therefore, we select random samples from $\mathcal{V}_{L,H}(S)$ and choose the sequence that returns the maximum value based on the RMSD measure. Our approach to generating adversarial sequences falls under the class of black-box attacks. This means that we only have access to the output of the network \cite{papernot2017practical}. 

It is worth noting that the inference time of complex protein folding systems, which apply multiple processing and alignment steps prior to the use of any neural network, such as AlphaFold is extremely high compared to NN-based image classifiers. The forward pass of such systems involves a large number of computations. This fact, along with the discrete nature of the input space, are the bottleneck of developing more complex black-box attacks \cite{mahmood2021back}, which in general require a high number of queries. 


\section{Complexity}

In this section, we formalize the problem of generating an adversarial attack for PFNNs and establish its complexity.

\newtheorem{definition}{Definition}
\begin{definition}[PFNN Adversarial Attack (PAA) Problem]
Given a learning model $\mathcal{A}(.\:; \theta): \mathcal{X}^n \rightarrow (\mathbb{R} \times \mathbb{R} \times \mathbb{R})^n$ mapping residues to 3-dimensional coordinates and parameterized by $\theta$, a sequence $S \in \mathcal{X}^n$, and a sequence alignment scoring matrix $B$, find an input sequence $S' \in \mathcal{X}^n$ such that $D_\mathrm{seq}(S, S') \leq L$ and $D_\mathrm{str}(\mathcal{A}(S), \mathcal{A}(S')) \geq U$, where the bounds $L$ and $U$ and distance functions $d$ and $D$ are given.
\label{def:PAA}
\end{definition}

We prove that the PAA problem is \textbf{NP-complete}. This establishes that, in general, there is no polynomial-time solution to the PAA problem unless \textbf{P} = \textbf{NP}. Due to this complexity and for ease of presentation, we adopt simple perturbation attacks for our experiments in the next section. We begin by defining the \textbf{NP-complete} problem to be reduced to an instance of the PAA problem.

\begin{definition}[CLIQUE Problem]
Given an undirected graph $G = (V, E)$ and an integer $k$, find a fully connected sub-graph induced by $V' \subseteq V$ such that $|V'| = k$.
\label{def:clique}
\end{definition}

\begin{theorem}
The PFNN Adversarial Attack (PAA) problem in Definition \ref{def:PAA} is \textbf{NP-complete}.
\label{theorem:complexity}
\end{theorem}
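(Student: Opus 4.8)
The plan is to reduce CLIQUE to PAA in polynomial time. Given an instance $(G,k)$ of CLIQUE with $G=(V,E)$ and $|V|=m$, I would construct a PAA instance whose parameters ($n$, $\theta$, $S$, $B$, $L$, $U$, and the functions $d$ and $D$) encode the graph, and argue that a valid adversarial sequence exists exactly when $G$ has a clique of size $k$. The central idea is to let the perturbed positions of $S'$ relative to $S$ select a candidate vertex subset $V'$, so that the sequence-distance budget $L$ forces $|V'|\le k$ (and a matching lower bound from the target $U$ forces $|V'|\ge k$), while the structural distance $D(A(S),A(S'))$ is engineered to reach its target value $U$ only when the selected vertices form a clique.

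Concretely, I would take $n = m$ (one residue position per vertex) and fix the baseline sequence $S$ to be the constant sequence over some reference amino acid. For each position $i$, perturbing $s_i$ to a designated alternative amino acid costs exactly one ``unit'' in the BLOSUM distance $D_{\mathrm{seq}}$ (this is arranged by choosing which entries of $B$ to use, or by rescaling), so that $D_{\mathrm{seq}}(S,S') = |V'|$ where $V' = \{i : s'_i \neq s_i\}$; setting $L = k$ then caps the selection at $k$ vertices. The network $A(\cdot;\theta)$ is a gadget: on input $S'$ it outputs coordinates such that the induced squared-distance matrix $[d(A(S')_i,A(S')_j)]$ differs from that of $A(S)$ in entry $(i,j)$ by a fixed positive amount precisely when both $i,j\in V'$ \emph{and} $\{i,j\}\in E$, and by $0$ otherwise. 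Then $D_{\mathrm{str}}(A(S),A(S'))$ (the Frobenius-type sum from Equation~\eqref{eqn: element wist}) equals a constant times the number of edges inside $V'$, which is at most $\binom{k}{2}$ and equals $\binom{k}{2}$ iff $V'$ is a $k$-clique. Choosing $U$ to be that maximum value makes the PAA instance solvable iff $G$ has a $k$-clique.

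I would fill in two technical points. First, the ``gadget network'' must actually be realizable as a model of the stated type $A(\cdot;\theta):\mathcal{X}^n\to(\mathbb{R}^3)^n$ and describable in polynomial size; since the definition places essentially no restriction on $A$ beyond its signature, it suffices to exhibit \emph{some} parameterized family (e.g. a lookup-table-like construction, or a small neural network with weights encoding the adjacency matrix) whose forward map has the required behavior, and to check that the coordinates realizing the prescribed pairwise squared distances exist in $\mathbb{R}^3$ — here I would keep the per-position displacements tiny and mutually ``orthogonal enough'' so that no Euclidean-embeddability obstruction arises, or, more cleanly, let $D$ itself be the distance-geometry measure $D_{\mathrm{str}}$ so that only the squared-distance matrix matters and I can prescribe coordinate perturbations directly. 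Second, I need the reduction to run in polynomial time: all of $n=m$, the matrix $B$, the sequence $S$, and the encoding of $\theta$ (essentially the adjacency matrix of $G$) are polynomial in $|G|$, and $L=k$, $U=\binom{k}{2}\cdot(\text{const})$ are trivial to compute.

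I expect the main obstacle to be the design of the network gadget so that the structural distance is an exact, clean function of ``number of internal edges of the selected set'' while simultaneously respecting the geometric consistency required of an output in $(\mathbb{R}^3)^n$ — i.e. ensuring the target squared-distance matrix is genuinely a Euclidean distance matrix of points in three dimensions, not merely a symmetric nonnegative matrix. Working directly with coordinate displacements (rather than prescribing distances and back-solving) and using $D=D_{\mathrm{str}}$ sidesteps this, so I would structure the proof that way; I would also note that the theorem only claims \textbf{NP}-hardness, so membership in \textbf{NP} (guess $S'$, verify $d(S,S')\le L$ and $D(A(S),A(S'))\ge U$ in polynomial time, assuming $A$, $d$, $D$ are polynomial-time computable) can be remarked on separately to justify the later claim of \textbf{NP}-completeness.
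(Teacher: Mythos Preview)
Your high-level reduction---CLIQUE to PAA with one residue position per vertex, the BLOSUM budget $L$ capping the selected set at size $k$, and the structural distance $D_{\mathrm{str}}$ engineered to detect whether the selected set is a clique---is exactly the paper's strategy. The divergence is in how the network gadget encodes the edge count, and there your plan has a real gap.

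You ask $A$ to output coordinates so that the entry $d(A(S')_i,A(S')_j)$ changes by a fixed positive amount precisely when $i,j\in V'$ and $\{i,j\}\in E$, and by zero otherwise. This per-pair encoding cannot be realized in $(\mathbb{R}^3)^n$ in general. Take the case where $V'$ is a $k$-clique and the baseline $A(S)$ places all residues at a common point: then the $k$ selected outputs in $A(S')$ must be mutually equidistant, i.e.\ form a regular $(k{-}1)$-simplex, which does not embed in $\mathbb{R}^3$ once $k\ge 5$. Starting from a general-position baseline does not help: requiring every pairwise squared distance among the $k$ selected points to increase by the same constant adds a multiple of $I-\tfrac{1}{k}J$ to the centered Gram matrix, forcing its rank to $k-1$. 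Your suggested fix of ``tiny, mutually orthogonal-enough'' displacements only buys three independent directions in $\mathbb{R}^3$, and ``prescribing coordinate perturbations directly'' still has to yield a squared-distance matrix that is actually realized by three-dimensional points---which is precisely the obstruction.

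The paper sidesteps this by dropping the per-pair encoding entirely. Its gadget uses only two meaningful output points: the first has all three coordinates equal to $|V'|$ (a unit-weight sum of the perturbation indicators), and the second has all three coordinates equal to the number of edges of $G$ induced by $V'$. Then $D_{\mathrm{str}}$ collapses to a closed-form expression in $|V'|$ and $|E(G[V'])|$, and the threshold $U=9\bigl(k-\tbinom{k}{2}\bigr)^4$ together with $L=6k$ is met exactly when $|V'|=k$ and $|E(G[V'])|=\tbinom{k}{2}$. The graph structure is compressed into two scalar summaries carried by two points, so no high-dimensional simplex ever has to be embedded. If you rework your gadget along these lines---a constant number of output points whose coordinates are aggregate counts computed from the adjacency of $G$---the rest of your argument (including your remark on \textbf{NP} membership) goes through unchanged.
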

\begin{proof}
It is easy to verify that the PAA problem is in \textbf{NP} since, given a solution sequence $S'$, one can check whether the constraints $D_{\mathrm{seq}}(S,S') \leq L$ and $D_{\mathrm{str}}(\mathcal{A}(S), \mathcal{A}(S')) \geq U$ are satisfied in polynomial time. It remains to be shown whether the PAA problem is \textbf{NP-hard}. We establish this result via a reduction from the CLIQUE problem in Definition \ref{def:clique}. Given a CLIQUE instance $\left< G = (V, E), k \right>$ with $|V| = n$ and $|E| = m$, we construct its corresponding PAA instance $\left< \mathcal{A}(.\:; \theta), S, B, L, U \right>$ as follows. Without loss of generality, let us consider a restricted version of the PAA problem where there are only two residue types $\{N, K\}$ with the corresponding BLOSUM62 sub-matrix $B' = 6 \cdot I$, where $I$ denotes the identity matrix. Following the one-hot representation of residues adopted in \cite{jumper2021highly}, any input tensor over $\{N, K\}$ is represented as a one-hot encoding $S^\text{in} \in (\mathbb{B} \times \mathbb{B})^n$ to be used as an input tensor to $\cal{A}$, where $s^\text{in}_{i0} = 1$ ($s^\text{in}_{i1} = 1$) denotes that residue $s^\text{in}_i$ is of type $N$ ($K$). Let $S = (N, N, \dots, N)$ denote the all-$N$ sequence. We set $L = 6k$ and $U = \frac{k(k-1)}{2} \sqrt{\frac{3}{n}}$. The connectivity structure of $\cal{A}$ is derived from the edges $E$ in the CLIQUE instance as follows. The first column of the input tensor corresponding to $s^\text{in}_{i0}$ for all $i \leq n$ is disconnected from the network and the second column corresponding to $s^\text{in}_{i1}$ is connected to $\cal{A}$ such that, for each edge $(v_i, v_j) \in E$, we have a connection from $s^\text{in}_{i1}$ and $s^\text{in}_{j1}$ to each of the three outputs in the first three-dimensional coordinate of $\mathcal{A}(S^\text{in})_1$. All connections have a weight of unity and this defines the parameters $\theta$ of the model $\cal{A}$. Therefore, without loss of generality, we are only considering the first of the $n$ output three-dimensional coordinates $\mathcal{A}(S^\text{in})_1$. In particular, these values keep track of the number of edges induced by the vertices in $G$ corresponding to the non-zero entries in $s^\text{in}_{11}, \dots, s^\text{in}_{1n}$. We now prove that there is a clique of size $k$ in $G$ if and only if there is a feasible solution $S^\text{in} = S'$ to the reduced PAA instance.

($\implies$) Assume there is a clique of size $k$ in $G$. We can derive a feasible solution $S'$ to the reduced PAA instance as follows. For every vertex $v_i \in V$ (not) in the clique, let ($s'_{i0} = 1$) $s'_{i1} = 1$. Since $S$ is the all-$N$ sequence, its corresponding one-hot encoding consists of $s_{i0} = 1$ for all $1 \leq i \leq n$. Thus, the corresponding BLOSUM62 distance is

%
\begin{equation}
\begin{gathered}
D_{\mathrm{seq}}(S,S') = \sum_{1 \leq i \leq n} \left( 6 - 6 \cdot \mathbf{1}(s_i \neq s'_i) \right) = 6k\:.
\end{gathered} 
\end{equation}
%


This satisfies the sequence alignment constraint defined by $D_{\mathrm{seq}}(S,S') \leq L = 6k$. Furthermore, the solution $S'$ induces outputs of $x'_1 = y'_1 = z'_1 = k(k-1)/2$, leading to an RMSD of $U$. Without loss of generality, we omit the alignment step in computing the RMSD and therefore assume that $\mathcal{A}(S') = \hat{\mathcal{A}}(S')$.
The corresponding RMSD distance $D_{\mathrm{str}}( \mathcal{A}(S),\hat{\mathcal{A}}(S'))$ in output predictions is presented below. Recall that $x_1 = y_1 = z_1 = 0$ for the the all-$N$ sequence $S$ because its corresponding column in the one-hot encoding is disconnected from the network.
%
%

\begin{equation}
\begin{gathered}
D_{\mathrm{str}}( \mathcal{A}(S),\mathcal{A}(S')) = \sqrt{ \frac{1}{n} \sum_{i\in [n]} d(\mathcal{A}(S)_i, \hat{\mathcal{A}}(S')_i) } \\ 
= \sqrt{\frac{1}{n} \left[ 3 \left(0 - \frac{k (k - 1)}{2} \right)^2 \right]} = \frac{k(k-1)}{2} \sqrt{\frac{3}{n}} \:.
\end{gathered} 
\end{equation}


Thus, the constraint $D_{\mathrm{str}}(S, S') \geq U = \frac{k(k-1)}{2} \sqrt{\frac{3}{n}}$ is satisfied.

($\impliedby$) We prove the contrapositive. That is, if there is no clique of size $k$ in $G$, then the reduced PAA instance is infeasible. We proceed by showing that there must be exactly $k$ non-zero entries in the column vector $\{s'_{i1} | i \leq n\}$ in order to satisfy constraints $D_{\mathrm{seq}}(S,S') \leq L = 6k$ and $D_{\mathrm{str}}(\mathcal{A}(S),\mathcal{A}(S')) \geq U$ and that, if there is no clique of size $k$, then there is no choice of $k$ non-zero entries in $\{s'_{i1} | i \leq n\}$ that will satisfy these constraints. Let $k'$ denote the number of non-zero entries in $\{s'_{i1} | i \leq n\}$. To satisfy $D_{\mathrm{seq}}(S,S') \leq L = 6k$, it follows that $k' \leq k$. If $k' < k$, then the maximum value of $D_{\mathrm{str}}(\mathcal{A}(S), \mathcal{A}(S'))$ is $\frac{k'(k'-1)}{2} \sqrt{\frac{3}{n}} < \frac{k(k-1)}{2} \sqrt{\frac{3}{n}}$ and denotes to the case where the $k'$ non-zero entries correspond to a clique of size $k'$ in $G$. The strict inequality is due to the monotonically increasing nature of this equation. Therefore, it must be that $k = k'$ and we have outputs $x'_1 = y'_1 = z'_1 = k(k-1)/2$ as before. Suppose that the $k'$ non-zero entries in $\{s'_{i1} | i \leq n\}$ do not correspond to a clique in $G$. Then the values $x'_1$, $y'_1$, and $z'_1$ output by $\mathcal{A}$ and corresponding to the number of edges induced by the chosen non-zero entries would be strictly less than $k(k-1)/2$. Therefore, we would have $D_{\mathrm{str}}(\mathcal{A}(S),\mathcal{A}(S')) < U$. This proves that the reduced PAA is infeasible.
\end{proof}

\section{Experimental Results}
\begin{table}[t]
\caption{\small{RMSD results when $L\in \{20,30,40\}$.}}
\label{table:blosum}\centering
\vspace{-0.3cm}
 \scalebox{0.78}{\begin{tabular}{||c c | c c c c c c ||} 
 \hline
  Seq. ID  & $n$ & $L$ & RMSD & $\mu_{\textrm{all}}$ & $\mu_{\textrm{diff}}$ & $\mu'_{\textrm{all}}$  & $\mu'_{\textrm{diff}}$ \\ 
 \hline\hline

 Q14653  &  427  &  20 &  18.87  &  79.76 & 92.92 & 79.46 & 86.29 \\ 
\hline
Q14653  &  427  &  30 &  22.42  &  79.76 & 93.15  & 77.45 & 64.12 \\ 
\hline
Q14653  &  427  &  40 &  28.28  &  79.76 & 90.49 & 79.42 & 69.026 \\ 
\hline\hline

Q5BJD5  &  291  &  20 &  14.311 &  82.23 & 89.77  &  80.6  & 80.64 \\ 
\hline
Q5BJD5  &  291  &  30 &  15.708  &  82.23 & 59.26 &  83.13 & 43.53 \\ 
\hline
Q5BJD5  &  291  &  40 &  17.132  &  82.23 & 62.02 &  83.21 & 62.83 \\ 
\hline\hline

P59595  &  422  &  20 & 24.321 & 68.25  &  91.44 & 67.05  &  89.51   \\ 
\hline
P59595  &  422  &  30 & 30.139 & 68.25  &  93.142 & 67.44 &  89.29  \\ 
\hline
P59595  &  422  &  40 & 30.675 & 68.25  &  46.87 & 66.4   &  29.33  \\ 
\hline\hline


P0DTC9  &  419  &  20 &  26.51  &  68.39 & 80.32 &  68.09 & 80.316 \\ 
\hline
P0DTC9  &  419  &  30 &  26.27  &  68.39 & 68.05 &  68.61 & 65.18 \\ 
\hline
P0DTC9  &  419  &  40 &  31.33  &  68.39 & 40.52 &  67.76 & 35.56 \\ 
\hline\hline


P07711  &  333  &  20 &  7.09  &  93.68  & 92.4 & 93.2 & 81.12 \\ 
\hline
P07711  &  333  &  30 &  8.52  &  93.68  & 95.91 & 92.95 & 92.69 \\ 
\hline
P07711  &  333  &  40 &  9.246  &  93.68 & 95.91 & 92.85 & 95.76 \\ 
\hline\hline

Q9Y397  &  364  &  20 &  11.184  & 84.24   & 97.35 & 83.85 & 95.81 \\ 
\hline
Q9Y397  &  364  &  30 &  11.828  &  84.24  & 95.91 & 83.51 & 85.416 \\ 
\hline
Q9Y397  &  364  &  40 &  14.222  &  84.24  & 95.91 & 83.71 & 89.79 \\ 
\hline\hline

 \end{tabular}}
\vspace{ -0.3cm}
\end{table}

\begin{table}[t]
\caption{\small{RMSD results for the three considered categories.}}
\vspace{ -0.3cm}
\label{table:conf}\centering
 \scalebox{0.78}{\begin{tabular}{||c c | c c c c c c ||} 
 \hline
  Seq. ID  & $n$ & Category & RMSD & $\mu_{\textrm{all}}$ & $\mu_{\textrm{diff}}$ & $\mu'_{\textrm{all}}$  & $\mu'_{\textrm{diff}}$ \\ 
 \hline\hline

 Q01629  &  132  &  MIN. &  6.02  &  64.63 & 32.44 &  60.99 & 36.99 \\ 
\hline
Q01629  &  132  &  AVG. &  19.92  &  64.63 & 64.75 &  63.77 & 69.57 \\ 
\hline
Q01629  &  132  &  MAX. &  19.906  &  64.63 & 66.99 &  90.21 & 90.19 \\ 
\hline\hline

Q5BJD5  &  291  &  MIN. &  14.023 &  82.23 & 38.86 &  81.22 & 37.79 \\ 
\hline
Q5BJD5  &  291  &  AVG. &  14.232  &  82.23 & 82.24 &  81.17 & 77.23 \\ 
\hline
Q5BJD5  &  291  &  MAX. &  13.567  &  82.23 & 98.17 &  82.42 & 98.1 \\ 
\hline\hline

P59595  &  422  &  MIN. & 24.74 &  68.25  &  29.13 & 67.57 &  31.5  \\ 
\hline
P59595  &  422  &  AVG. & 28.164 & 68.25  &  69.44 & 68.69 &  69.04  \\ 
\hline
P59595  &  422  &  MAX. & 24.62 & 68.25  &  96.14 & 67.51 &  96.44  \\ 
\hline\hline

P59633  &  154  &  MIN. &  21.67  &  44.82 & 27.14 &  44.15 & 38.75 \\ 
\hline
P59633  &  154  &  AVG. &  21.52  &  44.82 & 45.1 &  43.8 & 42.26 \\ 
\hline
P59633  &  154  &  MAX. &  23.13  &  44.82 & 61.26 &  46.13 & 54.84 \\ 
\hline\hline

P0DTC9  &  419  &  MIN. &  25.593  &  68.39 & 28.46 &  67.9 & 28.83 \\ 
\hline
P0DTC9  &  419  &  AVG. &  21.767  &  68.39 & 68.37 &  68.5 & 70.83 \\ 
\hline
P0DTC9  &  419  &  MAX. &  23.685  &  68.39 & 97.1 &  68.64 & 96.94 \\ 
\hline

 \end{tabular}}
\vspace{ -0.5cm}
\end{table}

\begin{figure*}[t]
\centering
\includegraphics[width=18cm]{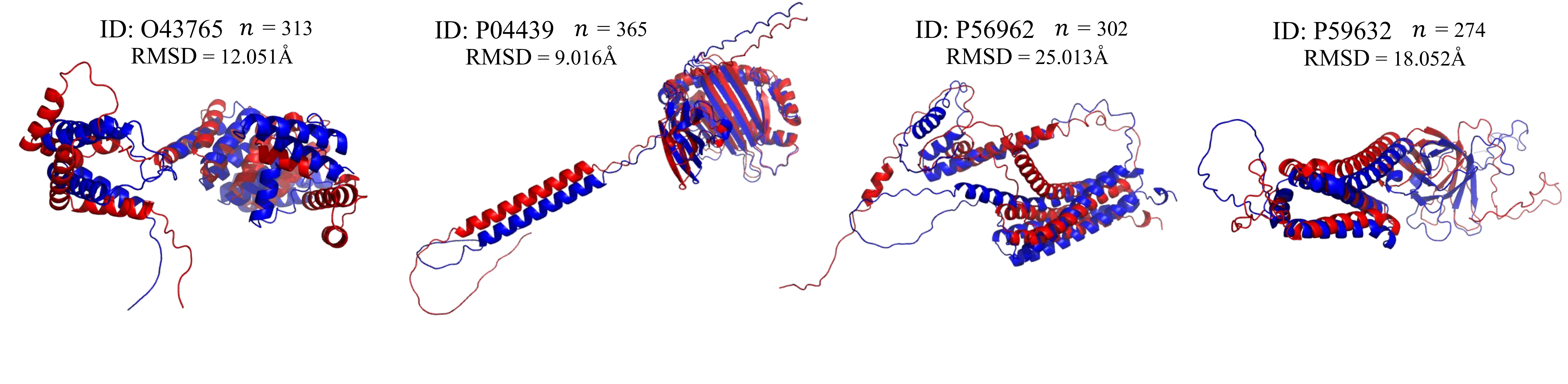}
\vspace{-1.55cm}
\caption{\small{The structures of the original (black) and adversarial (red) sequences from AlphaFold. The 3D plots, aligned using PyMol \cite{pymol}, are for proteins O43765 (first), P04439 (second), P56962 (third), and P59632 (fourth). For structure differences, the RMSD values are reported. \textcolor{black}{The structures of the complete list of sequences are given in the supplementary material.}}}
\vspace{-0.3cm}
\label{fig: output}
\end{figure*}

\begin{table*}[t]
\caption{\small{RMSD, GDT-TS, and GDT-HA results using the full database AlphaFold configuration with $L=20$ and $H=5$. The average columns correspond to 20 adversarial samples for each protein ID. The complete table is placed in the supplementary material.}}
\vspace{ -0.3cm}
\label{table:full db configurations}\centering
 \scalebox{0.78}{\begin{tabular}{||c c c | c c | c c | c c | c ||} 
 \hline
  Seq. ID  & $n$ & Similarity (\%) &  RMSD & Avg. RMSD  & GDT-TS (\%) & Avg. GDT-TS (\%) & GDT-HA (\%) & Avg. GDT-HA (\%) &  run-time (days)  \\
 \hline\hline

\hline
O43765  &  313  &  98.4026  &  14.438  &  9.1741  &  13.9776  &  35.4832  &  2.8754  &  17.6358  &  1.6068  \\ 
\hline
\hline
P56962  &  302  &  98.3444  &  22.301  &  15.8695  &  12.3344  &  18.6921  &  3.4768  &  5.803  &  0.5959  \\ 
\hline
\hline
P04439  &  365  &  98.6301  &  6.162  &  3.7942  &  47.7397  &  68.2705  &  25.0  &  45.774  &  0.6429  \\ 
\hline
\hline
Q99836  &  296  &  98.3108  &  8.761  &  5.2907  &  24.1554  &  46.6723  &  7.6858  &  26.2584  &  0.6246  \\ 
\hline
\hline
P59632  &  274  &  98.1752  &  13.018  &  8.4704  &  24.8175  &  41.0401  &  9.0328  &  21.6834  &  0.5214  \\ 
\hline
\end{tabular}}
\vspace{ -0.5cm}
\end{table*}

\begin{table*}[t]
\caption{\small{Prediction confidence results using the full database AlphaFold configuration with $L=20$.} }
\vspace{ -0.3cm}
\label{table:full db configuration confidence results}\centering
 \scalebox{0.78}{\begin{tabular}{||c c | c | c c | c c | c c | c c||} 
 \hline
  Seq. ID  & $n$ & RMSD &  $\mu_{\textrm{all}}$ & $\sigma_{\textrm{all}}$  & $\mu_{\textrm{diff}}$ & $\sigma_{\textrm{diff}}$ &  $\mu'_{\textrm{all}}$ &  $\sigma'_{\textrm{all}}$  &  $\mu'_{\textrm{diff}}$ &  $\sigma'_{\textrm{diff}}$  \\
 \hline\hline
\hline
O43765  &  313  &  14.438  &  80.221  &  19.634  &  94.786  &  1.027  &  80.554  &  19.423  &  93.71  &  1.392  \\ 
\hline
\hline
P56962  &  302  &  22.301  &  69.172  &  23.753  &  96.516  &  0.409  &  69.342  &  23.759  &  96.54  &  0.463  \\ 
\hline
\hline
P04439  &  365  &  6.162  &  86.845  &  18.995  &  44.23  &  2.704  &  86.921  &  19.068  &  44.678  &  3.968  \\ 
\hline
\hline
Q99836  &  296  &  8.761  &  81.213  &  13.817  &  78.914  &  7.454  &  80.918  &  13.971  &  72.198  &  6.253  \\ 
\hline
\hline
P59632  &  274  &  13.018  &  58.367  &  18.783  &  66.136  &  2.029  &  57.364  &  18.794  &  60.926  &  4.362  \\ 
\hline

\end{tabular}}
\vspace{ -0.3cm}
\end{table*}

\begin{table*}[t]
\caption{\small{Overall Prediction and attack results for the reduced and full database configurations of AlphaFold.}}
\vspace{ -0.3cm}
\label{table:overall results}\centering
 \scalebox{0.78}{\begin{tabular}{|| c | c c | c c | c c | c c | c c ||} 
 \hline
  Configuration.  & Avg. $n$ & Std. $n$ &  Avg. $\mu_{\textrm{all}}$ & Std. $\mu_{\textrm{all}}$ & Avg. RMSD & Std. RMSD & Avg. GDT & Std. GDT & Avg. run-time & Std. run-time  \\
 \hline\hline
reduced database  &  480.53 &  416.66  &  78.22  &  10.96  &  15.31  &  11.24  &  34.08  &  28.39  &  0.68  &  0.59  \\ 
\hline
full database     &  410.73 &  336.63  &  78.25  &  10.23  &  14.78  &  11.18  &  34.95  &  28.16  &  0.86  &  0.63  \\ 
\hline

\end{tabular}}
\vspace{ -0.5cm}
\end{table*}




For our experimental setup, we use the default settings of the latest version of AlphaFold \footnote{\url{https://github.com/deepmind/alphafold}}. This includes the initial multi-sequence alignment (MSA) step, the five-model ensembles predictions, recycling, output confidence ranking, and amber relaxation. For further details about each step, we refer the reader to \cite{AlphaFold2021} and its supplementary information. \textcolor{black}{We include results from using the high-accuracy full database configuration of the initial AlphaFold MSA step along with the less accurate (and faster) reduced database option}. In order to compute the RMSD and GDT, we need to employ an alignment algorithm. In this paper, we use the built-in alignment PyMOL procedure without outlier rejections \cite{pymol}. \textcolor{black}{The parameters of PyMOL alignment are selected using the default settings, which include an outlier rejection cutoff of $2$, a maximum number of outlier rejection cycles of 5, and the use of the structural superposition step }. We note that these outliers only impact the calculations of the RMSD. 

Our adversarial sequences are generated by randomly sampling 20 sequences from the set $\mathcal{V}_{L,H}$ in \eqref{eqn: space} with $H=5$ and $L = 20$. Then, we pick the sequence that returns the maximum value in RMSD structural distance. We use an AMD EPYC 7702 64-Core Processor with 1 TiB of RAM and NVIDIA A100 GPU. We generate adversarial sequences against the COVID-19 protein sequences from the UniProt database considered by AlphaFold in \cite{jumper2020computational}. The original fasta (file extension for protein sequences) sequence files are available online \footnote{\url{https://ftp.uniprot.org/pub/databases/uniprot/pre_release/covid-19.fasta}}. Additionally, we generate adversarial sequences against most of the the UniProt (Universal Protein resource, a central repository of protein data created by combining the Swiss-Prot, TrEMBL and PIR-PSD databases \cite{uniprot2021uniprot}). \textcolor{black}{Our code is provided as supplementary material.}

\subsection{\textcolor{black}{BLOSUM Threshold Experiment}}

In this subsection, we want to investigate how a change in the bound on biological similarity changes the adversarial sequence. In other words, we show the impact of using different BLOSUM thresholds in set $\mathcal{V}_{L,H}$. As such, we randomly select 6 sequences and generate adversarial sequences by configuring the BLOSUM threshold, $L$, to be 20, 30, and 40 (we use strict equalities to ensure the exact BLOSUM distance) and set $H=5$. For each case, we obtain the RMSD after alignment as reported in the fourth column of Table~\ref{table:blosum}. Furthermore, we present the average confidence percentage level of the prediction of the original (adversarial) sequence as reported by AlphaFold and denoted by $\mu_{\textrm{all}}$ ($\mu'_{\textrm{all}}$). Additionally, in the 6th and 8th columns, we report the average confidence values for the residues that are different between the original and adversarial sequences. These are denoted by $\mu_{\textrm{diff}}$ and $\mu'_{\textrm{diff}}$, respectively. We observe that, in general, when the BLOSUM threshold distance increases, the RMSD also increases. This means that biologically increased distance in the input space, in general, causes higher changes in the output predictions of AlphaFold. In terms of the confidence scores, we observe that the change in the overall average confidence between the original and perturbed sequence is not significant. However, in almost all the considered cases, we notice that the prediction confidence of the altered residues has reduced for the adversarial sequence when compared to the ones reported for the original sequence.

\subsection{\textcolor{black}{Confidence Experiment }}

Given a sequence $S$, per residue, AlphaFold generates an estimate of its prediction confidence in the form of a value in $[0,100]$. This value is called the predicted Local Distance Test (pLDDT) and represents the predicted value on the lDDT-C$\alpha$ metric \cite{mariani2013lddt}.


In this subsection, we answer the following question. Does selecting the residues to be changed based on their low (or high) confidence scores impact the resulting RMSD between the original and adversarial structure prediction? Phrased differently, in terms of the RMSD, we illustrate the impact of using the prediction confidence scores of every residue of the predicted structure of the original sequence in determining the location of the residues to be altered in the adversarial sequence generation method presented in the previous section. As such, five, not cherry picked, randomly selected sequences are used. Then, the locations of the 5 residues to be altered are taken based on three categories as follows. Residues are selected with confidence values near the (\textit{i}) minimum confidence score (MIN. category), (\textit{ii}) the average score (AVG. category), and (\textit{iii}) the maximum confidence score (MAX. category). Results are presented in Table~\ref{table:conf}. We observe that, in general, selecting residues with low or high confidence scores is not related to the amount of the induced RMSD at the output. \textcolor{black}{As such, in our method, the locations of the flipped residues are selected independent of the confidence scores.}

\subsection{COVID-19 Case Studies}
\label{sec: quant case studies}


\textcolor{black}{We apply our adversarial approach to 111 publicly available COVID-19 protein sequences as of the time of this writing per the UniProt database using AlphaFold full database configuration. Additionally, in the supplementary material, we provide complete results using the reduced AlphaFold configuration.} The BLOSUM62 distance between the original and adversarial sequences is at most 20, thus they are biologically close to each other \cite{chothia1986relation, sander1991database}. Given the long list of the considered sequences, we describe only the following. SGTA\_HUMAN Small glutamine-rich tetratricopeptide repeat-containing protein alpha (O43765), HLAA\_HUMAN HLA class I histocompatibility antigen, A alpha chain (P04439), STX17\_HUMAN Syntaxin-17 (P56962), AP3A\_SARS ORF3a (P59632), and MYD88\_HUMAN Myeloid differentiation primary response protein MyD88 (Q99836). The cases covered include homo sapiens and severe acute respiratory syndrome coronavirus 2 (2019-nCoV) (SARS-CoV-2) organisms which provide a wide variety of proteins. The considered sequences vary in length as they range from $n=22$ to $n=2511$.\par

Figures~\ref{fig: intro} and \ref{fig: output} show the aligned predicted structures of the proteins described earlier where the original sequence is given in black and the adversarial sequence is given in red. We observe that, independent of the predicted structure of the original sequence, a small change in the input sequence results in significant changes in the output structures. \textcolor{black}{ The resulting structural distances (similarities) measured in \r{A} (percentage) are given in terms of the RMSD (GDT-TS) in the fourth (sixth) column of Table~\ref{table:full db configurations} for the full database configuration. Furthermore, we report the results using GDT-HA in the eighth column. The high similarity between the original and adversarial sequences is observed from the third column. The similarity percentage is calculated as $100(n-D_{\textrm{ham}}(S,S'))/n$, where $D_{\textrm{ham}}(S,S') \leq H = 5$. The complete results of all the considered proteins, including reduced AlphaFold configuration, are provided in the supplementary material. } \par

As observed from the RMSD and GDT results in Table~\ref{table:full db configurations}, small changes in the input sequence corresponding to only five residues cause AlphaFold to predict structures that are highly divergent from the predicted structure of the original sequence. \textcolor{black}{The last column in Table~\ref{table:full db configurations} reports the total execution time (in days) of running the $20$ adversarial sequences that were randomly selected from the set $\mathcal{V}_{L,H}$, which is shown to scale with the sequence length. We only select 20 samples given the long time incurred by AlphaFold to predict the output structure.}

\textcolor{black}{Additionally, in Table~\ref{table:full db configuration confidence results}, we report the average (deviation) prediction confidence results as for all the residues (designated with subscript `all') and for the 5 altered residues (subscript `diff'). The standard deviation is denoted $\sigma$. We observe that, independent of the average prediction confidence, the RMSD between the original and adversarial predicted structures is always high. This is noted for both the full and reduced database configurations of AlphaFold. Moreover, we observe that AlphaFold predicts the adversarial structure with similar confidence values to the original sequence (e.g., see the 4th and 8th columns in both tables). The same observation holds for the entire sequence and for the altered residues (columns 6 and 10).}


\textcolor{black}{In Tables~\ref{table:full db GDT based on confidence regions I} and \ref{table:full db GDT based on confidence regions II} of the supplementary, we break down GDT scores between the structures of the original and perturbed sequences based on the prediction confidence scores of the original sequence. We use the regions (1 to 4) defined by AlphaFold. As observed w.r.t. all regions, GDT scores are, in general, low.}

\textcolor{black}{For the considered dataset, the values presented in Table~\ref{table:overall results} gauge the overall robustness of AlphaFold to adversarial sequences. As indicated in the documentation of AlphaFold, for better accuracy, the full database configuration incurs a higher execution time compared to the reduced database configuration. 
The reported average values of the RMSD and GDT-TS measures are 14.78\r{A} and 37.95\%, respectively. 
In CASP14 (year 2020), AlphaFold achieved a median GDT-TS score of 92.4\%, and 88\% of their predictions fall under RMSD = 4\r{A} \footnote{\url{https://predictioncenter.org/casp14/index.cgi}}. These results are obtained by comparing the predicted and ground truth structures. The CASP14 AlphaFold results underscore the significance of the values reported in Tables~\ref{table:full db configurations} and \ref{table:overall results}, as they show how small changes in the input sequences could damage the predictions (See columns 6 to 9 in Table~\ref{table:overall results}). The key takeaway is that \textit{AlphaFold is generally not robust even when a basic approach is used to generate perturbations of the input protein sequence.}}


\section{Conclusion}

The groundbreaking progress made in recent years on the prediction of protein folding structures promises to enable profound advances in the understanding of diseases, the mapping of the human proteome, and the design of drugs and therapeutics. However, until these predictions are shown to be robust, we argue that the grand challenge of predictive protein folding persists. In this paper, we have presented the first work in this direction by demonstrating that Protein Folding Neural Networks (PFNNs) are often susceptible to adversarial attacks in the form of minor perturbations to the input protein sequence. These perturbations can induce great changes in the predicted protein structure and the resulting lack of robustness precludes the adoption of such PFNNs in safety-critical applications. We have employed standard protein structural distance and similarity to measure the robustness of AlphaFold. While the perturbation methods employed in this paper were basic for the purposes of illustrating the lack of robustness of PFNNs, the \textcolor{black}{results presented} herein can be readily used as a baseline for future work on adversarial attacks on PFNNs and their robustness. 


\bibliography{aaai23.bib}


\newpage
\begin{table*}[t]
\caption{\small{RMSD, GDT-TS, and GDT-HA results using the reduced database AlphaFold configuration with $L=20$ and $H=5$. The average results correspond to $20$ adversarial samples for each protein ID. Part I of II.}}
\label{table:reduced database results}\centering
 \scalebox{0.78}{
}
\end{table*}

\end{document}